\newtheorem{theorem}{Theorem}[section]
\newtheorem{lemma}[theorem]{Lemma}
\newtheorem{example}[theorem]{Example}
\newcommand{\R}{\mathbb R}
\newcommand{\Ll}{\mathcal{L}}
\newcommand{\N}{\mathcal{N}}
\newcommand{\Hh}{\mathcal{H}}
\newcommand{\ro}[1]{{\color{blue} #1}}
\newcommand{\ma}[1]{{\color{purple} #1}}
\newcommand{\qed}{\hfill$\Box$}
\DeclareMathOperator{\Sd}{Sd}
\DeclareMathOperator{\bary}{bar}
\DeclareMathOperator{\softmax}{softmax}
\begin{document}

\title{SIMAP: A simplicial-map layer for neural networks}

\author{
Rocio Gonzalez-Diaz\thanks{
R. Gonzalez-Diaz is with the Department of Applied Math I, School of Engineering, University of Sevilla, Seville, Spain.}, 
Miguel A. Gutiérrez-Naranjo\thanks{M.A. Gutiérrez-Naranjo is with the Department of Computer Science and Artificial Intelligence, School of Engineering, University of Sevilla, Seville, Spain.}, 
\\Eduardo Paluzo-Hidalgo\thanks{E. Paluzo-Hidalgo is with the Department of Quantitative Methods, Universidad Loyola Andalucía, Campus Sevilla, Dos Hermanas, Seville, Spain.\\
Authors are listed alphabetically.}
}

\markboth{Journal of IEEE Transactions on Neural Networks and Learning Systems}%
{R. Gonzalez-Diaz, M. A. Gutiérrez-Naranjo, E. Paluzo-Hidalgo: 
SIMAP: A simplicial-map layer for neural networks}

\maketitle

\begin{abstract}
In this paper, we present SIMAP, a novel layer integrated into deep learning models, aimed at enhancing 
the interpretability of the output.
The SIMAP layer is an enhanced version of Simplicial-Map Neural Networks (SMNNs), an 
explainable neural network based on support sets and simplicial maps (functions 
 used in topology to transform shapes while preserving their structural connectivity).
The novelty of the methodology proposed in this paper is two-fold: 
Firstly, SIMAP layers work in combination with other deep learning architectures as an interpretable layer substituting classic dense final layers. 
Secondly, unlike SMNNs,
the support set is based on a fixed maximal simplex, the barycentric subdivision being efficiently computed with a matrix-based multiplication algorithm.
\end{abstract}

\section{Introduction}
In the current landscape of Artificial Intelligence, Deep Learning (DL) models have evolved to possess enormous architectures characterized by a multitude of parameters. These models, equipped with substantial computational capacity, have found widespread applications in many real-world domains.

In response to the increasing complexity of DL architectures, it is imperative to develop methods that improve the interpretability and explainability of the models.
In this way, addressing the opacity inherent in current DL architectures is not only a technical challenge but also a key step toward fostering trust and understanding in the deployment of artificial intelligence systems. This paradigm shift emphasizes the importance of developing models that not only deliver accurate predictions but also provide clear and understandable reasoning for their decisions.

Interpretable layers can significantly contribute to reliable AI systems by improving understanding to better assess whether the AI system is functioning as intended.
This term is typically used to describe a layer in a neural network where the operations and transformations it performs are understandable to humans. In this way, interpretability focuses on the transparency of the process.

In the literature, several approaches can be found to this aim.
One of the first attempts to develop interpretable layers is \cite{Zhang_2018_CVPR}, where each filter in the convolutional layer is activated by a certain object part of a certain category and remains inactivated on images of other categories. A recent survey that gives an explanation of a wide range of interpretable neural networks is
\cite{Liu2023InterpretableNN}.
In that paper, the authors divide interpretable neural network methods into two primary categories: model decomposition neural networks that involve the fusion of a conventional model-based method's interpretability with the neural network's learning capability for a clearer understanding of the neural network's operations; and semantic interpretable neural networks that derive their interpretability from a human-centric perspective, using visualization and semantic understanding.

According to this taxonomy, the approach developed in this paper can be considered as a model-decomposition neural network since we specifically construct a neural network layer, the so-called {\it SIMAP layer},  based on a simplicial map, which is a concept of algebraic topology that links two triangulated spaces in such a way that incidence relations are preserved.
Furthermore, SIMAP layers can be considered as an evolution of Simplicial Map Neural Networks (SMNNs), first defined in \cite{PaluzoHidalgo2020TwoHiddenLayerFF}.
SMNNs are a type of feedforward neural network that specifically uses
simplicial maps. This makes them useful as interpretable machine learning models.
As already demonstrated in \cite{PaluzoHidalgo2023ExplainabilitySM}, SMNNs are explainable neural network models, as they can rationalize the outputs. Specifically, an SMNN provides a justification based on similarities and dissimilarities of the data instance
to be explained with the instances of the training dataset that correspond
to the vertices of the simplex 
that contains it, after considering the dataset as a 
point cloud embedded in a metric space. The drawbacks of the SMNN approach are mainly twofold. First, the arbitrariness of the selection of a small subset of the training dataset needed to compute the Delaunay triangulation and, second, the computation of the Delaunay triangulation itself. Let us recall that the computational complexity of the Delaunay triangulation increases significantly in higher dimensions. 
In particular, for higher dimensions, the construction of a Delaunay complex becomes challenging, even for datasets containing more than a few hundred points.
This phenomenon is associated with the curse of dimensionality (see \cite{Chang2020DelaunayTriangulation}).

The SIMAP layers overcome the mentioned drawbacks by first training them using the barycentric coordinates of the input data with respect to a specific simplex surrounding the dataset. 
In doing so, we avoid the need to extract a small subset of the input data set and compute the Delaunay triangulation. In addition, we demonstrate that the capacity of a SIMAP layer increases with successive barycentric subdivisions of the simplex.
We also prove that the barycentric coordinates of the input data after the subdivision are obtained just by matrix multiplications. 
In this way, the vertices of the simplex that contain an input point are no longer part of the training set, improving, at the same time, the interpretability of the model, as the entire process becomes transparent and easily understandable to humans. 

Another approach that establishes a topology-based layer is
\cite{pmlr-v108-gabrielsson20a} where the authors introduce a topology layer that computes persistent homology (a tool that captures how topological features change
over an increasing sequence of complexes).
In contrast to the SIMAP layer, the layer defined in \cite{pmlr-v108-gabrielsson20a}
needs to have on hand an ordering of the points of the data set to induce filtration, i.e. an increasing sequence of triangulations.
A different approach is the recent field of topological deep learning (TDL), where the key idea is that input data can have a rich structure, including graphs, hypergraphs or cell complexes \cite{hajij}. 
As explained in \cite[page 60]{hajij}, ``the architecture of a TDL network can make it difficult to interpret the learnt representations and understand how the network is making predictions".
On the contrary, so far, SIMAP layers only support datasets embedded in a Euclidean space $\R^n$ 
but, as we will see later, they are clearly interpretable.

This paper is organized as follows. 
In Section \ref{sec:background}, we introduce the background and some basic definitions. 
Section \ref{subsec:tetra} is the paper's main section and is devoted to introducing the novel simplicial map layer based on barycentric coordinates and barycentric subdivisions. 
In Section \ref{sec:exper}, some experiments are performed to evaluate the effectiveness and reliability of the proposed method. 
We end the paper with a section devoted to conclusions and future work.

\section{Background}\label{sec:background}

In the following, we recall concepts such as simplicial complexes, simplicial maps, barycentric coordinates, and barycentric subdivisions, which will be later combined to improve the interpretability of deep neural network models.

\subsection{Combinatorial topology}

Let $V$ be a finite nonempty set whose elements are called vertices. A simplicial complex $K$ with vertex set $V$, is a collection  of nonempty subsets of $V$ satisfying that:
\begin{itemize}
    \item for each vertex $v$ in $V$, the set $\{v\}$ is in $K$.
    \item If $\tau$ is in $K$ and $\sigma\subset \tau$, then $\sigma$ is in $K$.
\end{itemize}
Every non-empty subset in $K$ is called a simplex; its dimension is given by its cardinality minus one. 
Given two simplices $\sigma$ and $\tau$ in $K$, we say that $\sigma$ is a face of $\tau$ whenever $\sigma\subset \tau$.
The simplices in $K$ that are not the face of other simplices of $K$  are called maximal simplices.

Let us consider a simplicial complex $K$ with a set of vertices $V\subset\R^n$ such that any simplex in $K$ is a set of (affinely independent) points. Let
$\sigma=\{v^0,v^1,\dots,v^d\}$ be a $d$-simplex of $K$. Then, the geometric realization of $\sigma$, denoted by $|\sigma|$, is defined by the convex hull of the vertices of $\sigma$:
\begin{equation*}
\begin{split}
|\sigma|=\Big\{\mbox{$x=\sum_{j=0}^d b_j(x) v^j$} \ \mid \   \mbox{$\sum_{j=0}^d b_j(x) =1 $ and \hspace{1.2cm}  }
\\ 
b_j(x)\ge 0 \ \forall j\in\{0,1,\dots,d\} 
\Big\}\,.    
\end{split}
\end{equation*}
The vector  $b(x)=(b_0(x),b_1(x),\dots,b_d(x))$
is called the barycentric coordinates of $x$ with respect to $\sigma$. 
Furthermore, the geometric realization or polytope of $K$,
denoted by $|K|$, is the union of the geometric realization of the simplices of $K$. 

The barycentric subdivision of a simplicial complex $K$,
denoted by $\Sd K$, is a simplicial complex whose vertex set is the set of the barycenters of all the simplices of $K$ (see Example~\ref{example:barycentric}).
Recall that the barycenter of a simplex
\ma{$\sigma=\{v^0,\dots,v^d\}$} is the 
point
$$\mbox{$\bary \sigma=\frac{1}{d+1}\sum_{i=0}^d v^i$}.$$
Besides, for each dimension $i\ge 0$, $\mu$ is an $i$-simplex of $\Sd K$ if and only if its vertices can be written as an ordered set $\{ w^0,w^1,\dots,w^i \}$ such that $w^k=\bary \sigma_k$ for $k\in\{0,1,\dots,i\}$ and  $$\sigma_0\subset \sigma_1\subset\cdots \subset\sigma_i
\in K \ .$$
 
The barycentric subdivision can be iterated as many times as needed.
If $K$ is subdivided $n$ times, then the corresponding simplicial complex is denoted by $\Sd^n K$.

Let us consider now a simplicial complex $K$ with vertex set $V=\{v^1,v^2,\dots,v^{\beta}\}\subset\R^n$. 
Consider also a simplex $\sigma=
\{
v^i 
\}_{i\in I}
$ of $K$, with $I\subset \{1,2,\dots,\beta\}$, 
and a point $x\in |\sigma|\subset\R^n$.
Let $b(x)$ be the barycentric coordinates of $x$ with respect to $\sigma$.
Then, the barycentric coordinates of $x$ with respect to the simplicial complex $K$ are given
by the vector  
$$\xi(x)=(\xi_1(x),\xi_2(x),\dots, \xi_\beta(x))
$$
where 
$\xi_i=0$ if $i  \not\in 
I$ and $(\xi_i(x))_{i\in I}=b(x)
$.

\begin{figure}[ht]
    \centering
\includegraphics[width=0.35\textwidth]{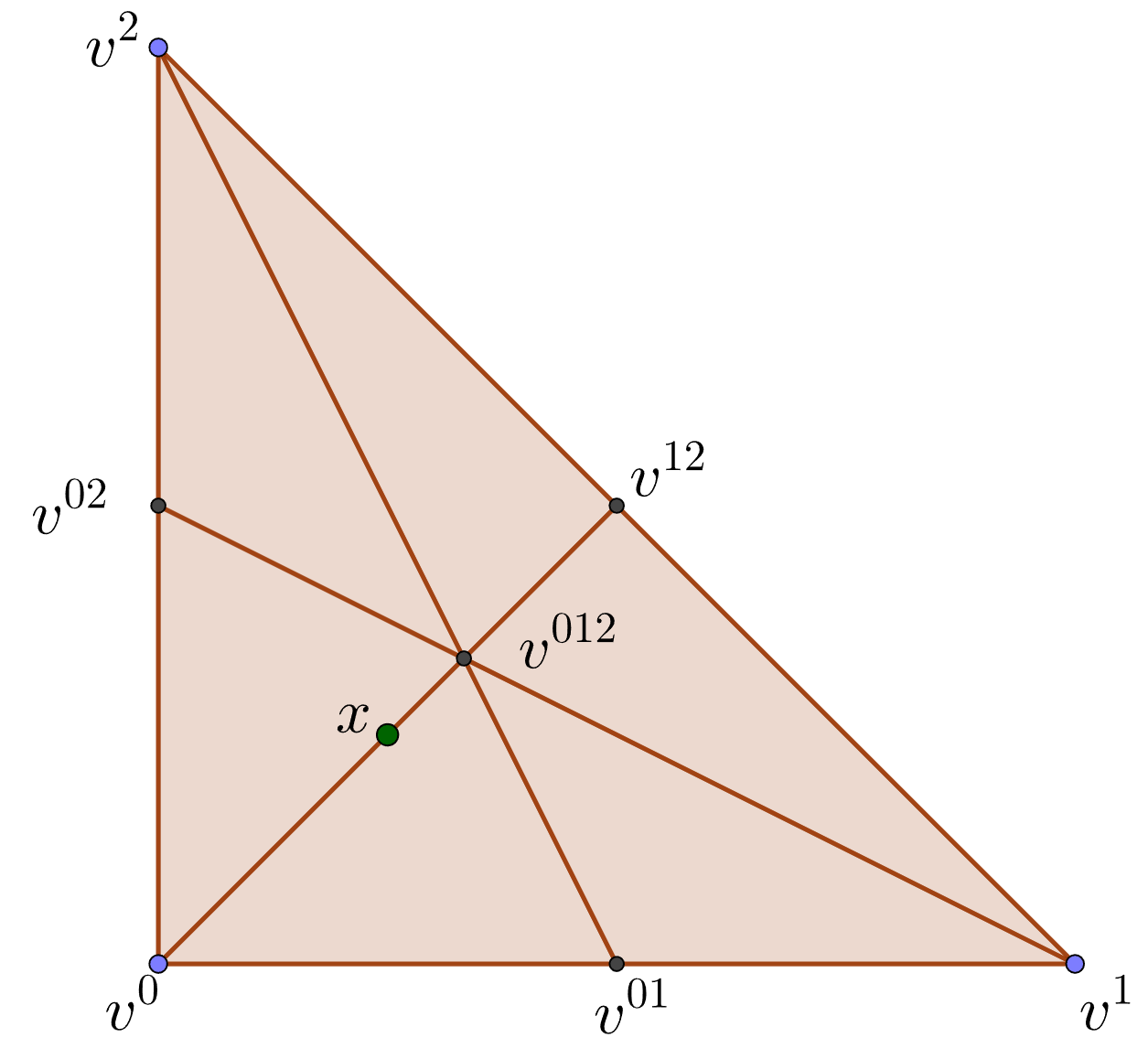}  
    \caption{The barycentric subdivision of the simplicial complex described in    Example~\ref{example:barycentric}.}
    \label{fig:example1}
\end{figure}

\begin{example}\label{example:barycentric}
Let $K$ be a simplicial complex with a set of vertices $\big\{v^0=(0,0),v^1=(2,0),v^2=(0,2)\big\}$. Then, $K=\big\{\{v^0\},\{v^1\},\{v^2\},\{v^0,v^1\},\{ v^1,v^2\},$ $\{ v^0,v^1,v^2\}\big\}$ and $\Sd K$ is composed of the following simplices (see Figure~\ref{fig:example1}):
\begin{itemize}
    \item Vertices: $\{ v^0,
 v^1,
 v^2,
 v^{01},
 v^{02},
 v^{12},
 v^{012}\}$;
\item $1$-simplices: $\big\{
\{ v^0,v^{01}\},
\{ v^0,v^{02}\},
\{ v^0,v^{012}\},\\
\hspace*{2.15cm}\{ v^1,v^{01}\},
\{ v^1,v^{12}\},
\{ v^1,v^{012}\},\\
\hspace*{2.15cm}
\{ v^2,v^{02}\},
\{ v^2,v^{12}\},
\{ v^2,v^{012}\},\\
\hspace*{2.15cm}
\{ v^{01},v^{012}\},
\{ v^{02},v^{012}\},
\{ v^{12},v^{012}\}
\big\}$; 
    \item $2$-simplices: $\big\{
\{ v^0,v^{01},v^{012}\},
\{ v^0,v^{02},v^{012}\},\\ \hspace*{2.15cm} \{ v^1,v^{01},v^{012}\},  \{ v^1,v^{12},v^{012}\},\\
\hspace*{2.15cm} \{ v^2,v^{02},v^{012}\},
\{ v^2,v^{12},v^{012}\}
\big\}$.
\end{itemize}
Let us now consider a point $x=(\frac{1}{2},\frac{1}{2})$. The barycentric coordinates of $x$ with respect to $K$ are $\xi_K(x)=(\frac{1}{2},\frac{1}{4},\frac{1}{4})$. The barycentric coordinates of $x$ with respect to the simplex $\{ v^0,v^{01},v^{012} \} $  are $(\frac{1}{4},0,\frac{3}{4})$. Hence, $\xi_{\Sd K}(x)=(\frac{1}{4},0,0,0,0,0,\frac{3}{4})$.
\end{example}

Given two simplicial complexes $K$ and $L$ with vertex set $V$ and $W$, respectively. A vertex map $\varphi^{(0)}: V \rightarrow W$ is 
a correspondence between vertices
such that for each simplex  $\sigma= 
\{ v^i
\}_{i\in I}$
of $K$, 
the set obtained from $\big\{ \varphi^{\scriptscriptstyle (0)}(v^{i})\big\}_{i\in I}$ after removing duplicated vertices is a simplex of $L$.
A vertex map induces a continuous function called a simplicial map defined for all $x\in |K|$ as follows:
\[\mbox{$\varphi(x)=\sum_{i\in I}b_i(x)\varphi^{(0)}(v^i)$ such that $x\in |\sigma|$
and $\sigma\in K$}.
\]


\begin{figure}[ht]
    \centering
\includegraphics[width=0.45\textwidth]{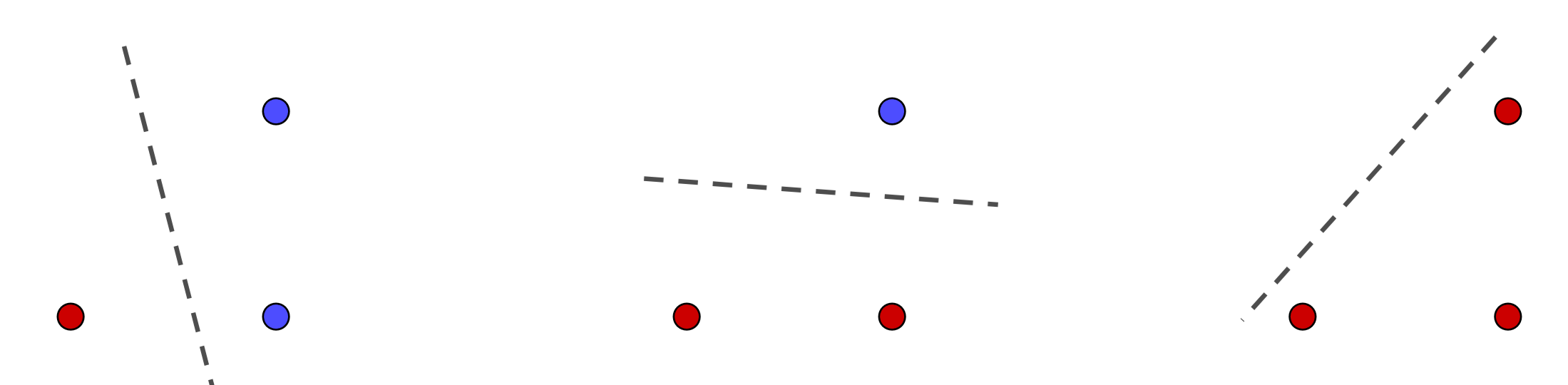}   
    \caption{All possible dichotomies of a dataset of size $3$ in $\R^2$ shattered by a line. 
    }
    \label{fig:example_VC}
\end{figure}


\subsection{Neural networks for classification}

A dataset 
\ma{$\mathcal{D}=(X,\lambda)$} of a classification problem is a pair composed of a set of points $X\subset \mathbb{R}^d$ of size $N$ and a function
$\lambda$ from $X$ to $\Lambda=\{0,1,\dots, k
\}$ where $k+1$ is the number of classes. We denote by $y_i$ the image by
$\lambda$ of $x_i\in X$ for all $i\in \{1,2,\dots,N\}$. The set of points $\{x\ | x\in X, \lambda(x)=c\}$ with $c\in\Lambda$ 
 is called the class $c$. The classification task consists of finding a function $\mathcal{N}: \mathbb{R}^d \rightarrow \Lambda$ that approximates $\lambda$.  
 
 In this paper, the function $\mathcal{N}$ is
 a neural network  defined between  
 $\mathbb{R}^d$ and $\mathbb{R}^k$ and composed of $m+1$ functions, 
$$\mathcal{N}=f_{m+1}\circ f_m\circ \cdots f_1\,,$$
 where the integer $m>0$ is the number of hidden layers and, for $i \in \{1,2,\dots, m+1\}$, the function $f_i:X_{i-1}\rightarrow X_i$ is defined as $f_i(y)=\phi_i(W^{(i)};y_i;b_i)$ where $X_0=\mathbb{R}^d$, $X_{m+1}=
 \mathbb{R}^k$, 
 $X_i\subseteq \mathbb{R}^{d_i}$, 
 with $d_0=d$ and $d_{m+1}=k$, 
 $W^{(i)}$ is a matrix $d_{i-1}\times d_i$,  
 $b_i$ is a $d_i$-dimensional vector, and $\phi_i: \R^{d_{i-1}}\to \R^{d_i}$ is a continuous function. 
 The matrices $\{W^{(i)}\}$ are  usually updated using an optimization algorithm such as stochastic gradient descent and are called the 
weight matrices
 of $\N$.

 From learning theory, we will use the concept of the Vapnik-Chervonenkis dimension (VC-dimension) to quantify the capacity of a neural network (see \cite{vapnik:264}).
 Given a point cloud $X$,  a dichotomy of  $X$ is one of the possible ways to label $X$  in the binary classification context. Then we say that a model shatters $X$  if, 
 for any possible dichotomy of $X$,  we can find the architecture of the model that correctly classifies it.
 The VC-dimension of a model is the maximum size of a dataset that can be found to be shattered by it.
 A good introduction to the VC-dimension of neural networks is \cite{sontang} where it is shown that
the  VC-dimension of perceptrons with $n+1$ entries (including the bias term)  and hyperplanes in $\R^n$ is $n+1$. For example, consider the two-dimensional case illustrated 
 in Figure~\ref{fig:example_VC}.
 All possible dichotomies of a dataset of size $3$ in $\R^2$ can be shattered by a hyperplane. However, no data set of size $4$ in $\R^2$ 
 can be shattered by a hyperplane. Therefore, the VC dimension of a hyperplane in $\R^2$ is $3$.

\section{Simplicial-map layer}\label{subsec:tetra}

In this section, we introduce the main novelty of this paper to add interpretability to deep neural networks: the simplicial-map layer, also called the SIMAP layer, 

Unlike the previous definitions of simplicial-map neural networks \cite{paluzohidalgo2023explainability}, the SIMAP layers are not based on Delaunay triangulation, but on barycentric subdivisions, so we do not need to establish the support set.
Furthermore, we prove that we do not need to explicitly calculate the successive barycentric subdivisions, since we can deduce the barycentric coordinates of a point with respect to a subdivision from its barycentric coordinates with respect to the simplicial complex that existed
before the subdivision.
Next, let us see step by step how to compute the SIMAP layers for a dataset $D=(X,\lambda)$ of a classification problem.

\subsection{Convex polytope and barycentric coordinates}
\label{subsec:poly}
First, we consider that $X\subset \R^n$ lies inside the unit hypercube $\Hh$ whose corners are the $2^n$  points with coordinates $0$ or $1$. 
If that is not the case, we can always do an affine transformation to achieve it.

Then, we compute an $n$-simplex $\sigma$ whose geometric realization (convex polytope) contains $\Hh$ and, therefore, the point cloud $X$.
The following lemma provides the coordinates of the vertices of $\sigma$.

\begin{lemma}\label{lemma_tetra}
Let $\Hh$ be the $n$-dimensional hypercube in $\mathbb{R}^n$
whose corners are the $2^n$  points with coordinates $0$ or $1$.
Then, the vertices of an $n$-simplex $\sigma$ satisfying $\Hh\subset |\sigma|$ are
the rows of the following $(n+1)\times n$ matrix
$$S=\begin{pmatrix}
0&0&\cdots&0& 0\\
n&0&\cdots&0& 0\\
0&n&\cdots& 0&0\\
\vdots&\vdots&\ddots&\vdots&\vdots\\
0&0&\cdots& n&0\\
0&0&\cdots& 0&n\\
\end{pmatrix}.$$ 
\end{lemma}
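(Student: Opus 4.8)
The plan is to read the vertices of $\sigma$ off the matrix $S$ and then describe $|\sigma|$ explicitly through barycentric coordinates. Writing $e^1,\dots,e^n$ for the standard basis of $\R^n$, the rows of $S$ are $v^0=(0,\dots,0)$ and $v^i=n\,e^i$ for $i\in\{1,\dots,n\}$. First I would check that $\sigma=\{v^0,v^1,\dots,v^n\}$ is genuinely an $n$-simplex: the displacement vectors $v^1-v^0,\dots,v^n-v^0$ equal $n\,e^1,\dots,n\,e^n$, which are linearly independent, so the $n+1$ vertices are affinely independent and $|\sigma|$ is a full-dimensional polytope.

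Next I would compute $|\sigma|$ from the definition. A point $x=(x_1,\dots,x_n)$ lies in $|\sigma|$ if and only if there are scalars $b_0,b_1,\dots,b_n\ge 0$ with $\sum_{j=0}^n b_j=1$ and $x=\sum_{j=0}^n b_j v^j$. Since $v^0$ is the origin and $v^i=n\,e^i$, the equation reads $x=\sum_{i=1}^n b_i\,n\,e^i$, which forces $b_i=x_i/n$ for $i\in\{1,\dots,n\}$ and hence $b_0=1-\tfrac1n\sum_{i=1}^n x_i$. Imposing $b_i\ge 0$ for all $i$ then amounts exactly to the inequalities $x_i\ge 0$ for every $i$ together with $\sum_{i=1}^n x_i\le n$, so $|\sigma|=\{\,x\in\R^n : x_i\ge 0 \text{ for all } i \text{ and } \sum_{i=1}^n x_i\le n\,\}$.

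Finally I would verify the containment. If $x\in\Hh=[0,1]^n$ then $0\le x_i\le 1$ for each $i$, so the nonnegativity constraints hold and $\sum_{i=1}^n x_i\le n\cdot 1=n$, so the last constraint holds as well; by the description of $|\sigma|$ above, $x\in|\sigma|$, and therefore $\Hh\subset|\sigma|$. Equivalently, since $|\sigma|$ is convex and $\Hh$ is the convex hull of its $2^n$ corners, it suffices to note that each corner, having all entries in $\{0,1\}$, satisfies both inequalities.

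There is no real obstacle here; the only point needing a little care is setting up the barycentric-coordinate system correctly — in particular recognizing that the weight $b_0$ attached to the origin is the slack $1-\tfrac1n\sum_i x_i$ — and then observing that the constraint $\sum_i x_i\le n$ is precisely what the scaling factor $n$ appearing in $S$ is chosen to make valid over all of $[0,1]^n$ (a factor $<n$ would fail at the corner $(1,\dots,1)$).
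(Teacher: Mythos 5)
Your proof is correct and amounts to the same elementary verification as the paper's: both reduce to checking that every point of $[0,1]^n$ has nonnegative coordinates summing to at most $n$. The only presentational difference is that you derive the half-space description of $|\sigma|$ and test the whole cube against it, whereas the paper exhibits each corner of $\Hh$ explicitly as a convex combination of the rows of $S$ and then invokes convexity of $|\sigma|$; your version has the minor virtue of also recording why the scaling factor $n$ is sharp.
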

\begin{proof} 
The corner $(0,0,\dots,0)$ of $\Hh$ is a vertex of $\sigma$ (first row of $S$). The corner $c=(c_1,c_2,\dots,c_n)$ of $\Hh$ with
$c_i=1$ for $i\in I\subset \{1,2,\dots,n\}$
lies in the face of $\sigma=\{v^i\}_{i\in I\cup\{0\}}$ whose vertices $v^i$ are 
 the $(i+1)$-th rows  of $S$ for $i\in I\cup\{0\}$. Specifically, $c=\frac{1}{n}\sum_{i\in I\cup\{0\}}v^i$.
The corner $(1,1,\dots,1)$ of $\Hh$ 
is the barycenter of the face of $\sigma$ whose vertices are the $(i+1)$-th rows of $S$ for $i\in \{1,2,\dots,n\}$.
Since $|\sigma|$ is the convex hull of the vertices of $\sigma$ and the corners of $\Hh$ are in $|\sigma|$ 
then $\Hh\subset |\sigma|$. 
\qed  
\end{proof}

Now, let $\sigma$ be the simplex defined in Lemma \ref{lemma_tetra}, let $x\in |\sigma |$ and let $b(x)$ be
the barycentric coordinates of $x$ with respect to $\sigma$. Let $T=({\bf 1}\ |\ S)$ be the matrix consisting of $S$ with an additional column whose entries are all $1$.
Taking into account that, by definition of barycentric coordinates,
$b(x)  \ T =(1 \ | \ x)$, 
the barycentric coordinates of $x$ with respect to $\sigma$ can be easily computed, as the following result shows.

\begin{lemma}\label{lemma:barycentric_comp}
Let $\sigma$ be the $n$-simplex defined in Lemma~\ref{lemma_tetra} and let $x=(x_1, \dots$, $x_n) \in |\sigma|$. Then, the barycentric coordinates $b(x)=(b_0(x), \dots,b_n(x))$ of $x$ with respect to $\sigma$
can be obtained by multiplying the vector $(1,x_1,\dots.x_n)$ by the matrix $M$, i.e.,
$$ (b_0(x),\dots,b_n(x)) = (1,x_1,\dots.x_n)\,M$$
where $M$ is the $(n+1)\times (n+1)$ matrix
$$M=\begin{pmatrix}
1&0&0&\cdots&0& 0\\
-\frac{1}{n}&\frac{1}{n}&0&\cdots&0& 0\\
-\frac{1}{n}&0&\frac{1}{n}&\cdots& 0&0\\
\vdots&\vdots&&\ddots&&\vdots\\
-\frac{1}{n}&0&0&\cdots& \frac{1}{n}&0\\
-\frac{1}{n}&0&0&\cdots& 0&\frac{1}{n}\\
\end{pmatrix}$$
\end{lemma}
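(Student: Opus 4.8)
The plan is to reduce the statement to the single matrix identity $TM=I_{n+1}$, where $T=({\bf 1}\mid S)$ is the $(n+1)\times(n+1)$ matrix introduced just before the lemma. Recall from the definition of barycentric coordinates that $b(x)\,T=(1\mid x)$ for every $x\in|\sigma|$, where $(1\mid x)$ abbreviates the row vector $(1,x_1,\dots,x_n)$; since $\sigma$ has $n+1$ affinely independent vertices, $T$ is nonsingular and $b(x)$ is the unique solution of this linear system. Hence, once $TM=I_{n+1}$ is established, right-multiplying $b(x)\,T=(1\mid x)$ by $M$ gives $b(x)=b(x)\,T\,M=(1\mid x)\,M$, which is exactly the claimed formula. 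So the only thing to prove is $M=T^{-1}$.

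I would verify $TM=I_{n+1}$ by a direct, if routine, computation that exploits the block structure of both matrices. The $0$-th rows of $T$ and of $M$ are both $(1,0,\dots,0)$; for $i\ge 1$ the $i$-th row of $T$ has a $1$ in column $0$ and an $n$ in column $i$ (zeros elsewhere), while for $k\ge 1$ the $k$-th row of $M$ has $-\tfrac1n$ in column $0$ and $\tfrac1n$ in column $k$ (zeros elsewhere). Expanding $(TM)_{ij}=T_{i0}M_{0j}+\sum_{k\ge 1}T_{ik}M_{kj}$ and distinguishing the cases $i=0$ versus $i\ge 1$ and $j=0$ versus $j\ge 1$, every entry collapses to $\delta_{ij}$: the $(i,0)$ entry for $i\ge 1$ is $1+n\cdot(-\tfrac1n)=0$, the $(i,i)$ entry for $i\ge 1$ is $n\cdot\tfrac1n=1$, the $(0,j)$ entry for $j\ge 1$ vanishes because $T_{0k}=0$ for $k\ge 1$, and the remaining off-diagonal entries vanish similarly.

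A cleaner route for the reader is to bypass invertibility altogether and check the two defining properties of barycentric coordinates directly. Writing $\sigma=\{v^0,\dots,v^n\}$ with $v^0=(0,\dots,0)$ and $v^j=n\,e_j$ for $j\ge 1$ (where $e_j$ is the $j$-th standard basis vector of $\R^n$), a one-line multiplication gives $(1\mid x)\,M=(b_0,\dots,b_n)$ with $b_0=1-\tfrac1n\sum_{i=1}^n x_i$ and $b_j=\tfrac1n x_j$ for $j\ge 1$. Then $\sum_{j=0}^n b_j=1$ and $\sum_{j=0}^n b_j\,v^j=\sum_{j=1}^n\tfrac1n x_j\,(n\,e_j)=x$, so by uniqueness of barycentric coordinates on the nondegenerate simplex $\sigma$ these must coincide with $b(x)$, while nonnegativity is automatic from the hypothesis $x\in|\sigma|$. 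I do not expect a genuine obstacle here: the one point requiring care throughout the matrix products is keeping the $0$-indexed bias/barycentric coordinate consistent with the $1,\dots,n$-indexed spatial coordinates.
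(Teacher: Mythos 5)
Your proposal is correct and its main route is exactly the paper's argument: right-multiply the defining relation $b(x)\,T=(1\mid x)$ by $M$ and use $T\cdot M=I_{n+1}$ (you merely spell out the entrywise verification of $TM=I$ that the paper asserts without computation). The alternative direct check of $\sum_j b_j=1$ and $\sum_j b_j v^j=x$ is a valid, slightly more self-contained variant, but not a genuinely different method.
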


\begin{proof}
Using that 
$$(b_0(x),\dots,b_n(x) )\,T = (1,x_1,\dots,x_n),$$
and since $T\cdot M$ is the identity matrix then:
\begin{equation*}
\begin{split}
 (b_0(x),\dots,b_n(x) ) &=(b_0(x),\dots,b_n(x) )\,T\cdot M\\
 &=(1,x_1,\dots,x_n)\,M.   
\end{split}
\end{equation*}
\qed    
\end{proof}

\subsection{Barycentric coordinates after a barycentric subdivision}\label{subsec:subdivision}

Once we know how to compute the barycentric coordinates of a point $x\in |\sigma|$,  the next step is to find how to calculate the barycentric coordinates of $x$ with respect to a subdivision of $\sigma$.

Let us consider an $n$-simplex $\sigma=\langle  v^0,\dots,v^n\rangle$ in $\R^n$.
Then, each ordering $(i_0,\dots,i_n)$ of the indices $(0,\dots, n)$ represents one of the maximal simplices obtained by the barycentric subdivision of $\sigma$. 
This way, the set of vertices of the $n$-simplex $\sigma^{i_0\cdots i_n}$, represented by the ordering
$(i_0,\dots,i_n)$, are 
$$\{v^{i_0},v^{i_0i_1},\dots, v^{i_0\dots i_n}\}$$
where $ v^{i_0\dots i_k}=\bary \langle v^{i_0},\dots, v^{i_k}\rangle$
for all $k\in\{0,\dots,n\}$,
and $\{
v^{i_0},\dots, v^{i_k}
\} $ is a face of $\sigma$.

\begin{example}
Following Example~\ref{example:barycentric}, the simplex $\{
v^1,v^{12},v^{012} 
\}$ will be identified with
the ordering $(1,2,0)$ and will be denoted by $\sigma^{120}$.
\end{example}

\begin{lemma}\label{lem:P}
Let $\sigma^{i_0\cdots i_n}$ be a maximal simplex of the barycentric subdivision of 
 $\sigma$  represented by an ordering $(i_0,\dots,i_n)$.
 Let $x \in \R^n$ such that its barycentric coordinates with respect to $\sigma$ are $(b_0(x), \dots, b_n(x))$.
 Then, 
 if $x\in |\sigma^{i_0\dots i_n}|$,
 the barycentric coordinates
 $(b^1_0(x),\dots,b^1_{n}(x))$ of $x$ with respect to $\sigma^{i_0\dots i_n}$ satisfies that:
 $$(b^1_0(x),\dots,b^1_n(x)) = (b_{i_0}(x), \dots, b_{i_n}(x)) \, P$$
where $P$ is the $(n+1)\times (n+1)$ matrix
$$
P =
\begin{pmatrix}
 1  &  0 &  0 & \cdots  & 0 & 0 & 0   \\
 -1 &  2 &  0 & \cdots  & 0 & 0 & 0   \\
 0  & -2 &  3 & \cdots  & 0 & 0 & 0   \\
\vdots & \vdots   & \vdots & \vdots         &\vdots          & \vdots \\
 0  &  0 & 0  & \cdots  &  n-1        & 0 & 0   \\
 0  &  0 & 0  & \cdots  & - (n-1)              &  n        & 0  \\
 0  &  0 & 0  & \cdots   & 0              & -n              & n+1
\end{pmatrix}.$$
\end{lemma}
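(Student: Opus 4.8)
The plan is to perform a single change of coordinates and then invert the resulting triangular linear system. Write the vertices of $\sigma^{i_0\cdots i_n}$ explicitly as
\[
w^k \;:=\; v^{i_0\cdots i_k} \;=\; \frac{1}{k+1}\sum_{j=0}^{k} v^{i_j},\qquad k=0,\dots,n .
\]
Since $x\in|\sigma^{i_0\cdots i_n}|$, there is a unique vector $(b^1_0(x),\dots,b^1_n(x))$ with nonnegative entries summing to $1$ such that $x=\sum_{k=0}^{n} b^1_k(x)\,w^k$; uniqueness of these coefficients is guaranteed because $w^0,\dots,w^n$ are affinely independent, being the vertices of the $n$-simplex $\sigma^{i_0\cdots i_n}$.

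First I would substitute the formula for $w^k$ into $x=\sum_k b^1_k(x)\,w^k$ and exchange the order of summation. Because $v^{i_j}$ occurs in $w^k$ precisely for $k\ge j$, this produces
\[
x \;=\; \sum_{j=0}^{n}\Bigl(\,\sum_{k=j}^{n}\frac{b^1_k(x)}{k+1}\,\Bigr)\,v^{i_j}.
\]
On the other hand, $|\sigma^{i_0\cdots i_n}|\subseteq|\sigma|$, so $x$ has barycentric coordinates $b(x)$ with respect to $\sigma$; reindexing the sum by the ordering $(i_0,\dots,i_n)$ gives $x=\sum_{j=0}^{n} b_{i_j}(x)\,v^{i_j}$. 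Both displays express $x$ as an affine combination of the affinely independent vertices $v^{i_0},\dots,v^{i_n}$ of $\sigma$, so by uniqueness of barycentric coordinates the coefficients coincide:
\[
b_{i_j}(x) \;=\; \sum_{k=j}^{n}\frac{b^1_k(x)}{k+1},\qquad j=0,\dots,n.
\]

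It remains to invert this system. Subtracting the equation for index $j+1$ from the one for index $j$ yields $b_{i_j}(x)-b_{i_{j+1}}(x)=\frac{b^1_j(x)}{j+1}$ for $j=0,\dots,n-1$, while the equation for $j=n$ reads $b_{i_n}(x)=\frac{b^1_n(x)}{n+1}$. Hence $b^1_j(x)=(j+1)\bigl(b_{i_j}(x)-b_{i_{j+1}}(x)\bigr)$ for $0\le j\le n-1$ and $b^1_n(x)=(n+1)\,b_{i_n}(x)$. Reading these off columnwise, this is exactly $(b^1_0(x),\dots,b^1_n(x))=(b_{i_0}(x),\dots,b_{i_n}(x))\,P$, since the $j$-th column of $P$ has $j+1$ in row $j$ and $-(j+1)$ in row $j+1$ (the last column having only the entry $n+1$ in row $n$). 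A telescoping computation then shows $\sum_{k=0}^{n} b^1_k(x)=\sum_{j=0}^{n} b_{i_j}(x)=1$, confirming that the output is again a barycentric-coordinate vector.

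I do not expect a genuine obstacle: the argument is elementary once the $w^k$ are written down. The steps that need care are the index bookkeeping --- note that $P$ multiplies the \emph{reindexed} vector $(b_{i_0}(x),\dots,b_{i_n}(x))$ rather than $(b_0(x),\dots,b_n(x))$ --- and the appeal to affine independence of $v^{i_0},\dots,v^{i_n}$ to pass from the identity of points in $\R^n$ to the scalar identities between their coordinates.
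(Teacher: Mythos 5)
Your proposal is correct and follows essentially the same route as the paper: both substitute the barycenter formulas for the vertices of $\sigma^{i_0\cdots i_n}$ to obtain the triangular relation $b_{i_j}(x)=\sum_{k=j}^{n}\frac{b^1_k(x)}{k+1}$ (the paper's matrix $Q$) and then invert it, the paper by asserting $Q\cdot P$ is the identity and you by the equivalent explicit subtraction of consecutive equations. Your version is slightly more self-contained in that it actually carries out the inversion and notes the affine-independence and normalization points, but the argument is the same.
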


\begin{proof}
By definition, $ x = \sum_{k=0}^n b^1_k(x) v^{i_0\dots i_k}$. 
Since 
$$\mbox{$v^{i_0\dots i_k}=\bary \langle v^{i_0},\dots,v^{i_k}\rangle = \frac{1}{k+1} \sum_{j=0}^k v^{i_j}$}$$ then
\begin{equation*}
\begin{split}
x &\mbox{$= b^1_0(x)\,v^{i_0}
+b^1_1(x)\,\frac{1}{2}(v^{i_0} + v^{i_1} )
+\dots$}
\\
&\mbox{$\;\;\;+ b^1_n(x)\,\frac{1}{n+1}(v^{i_0} + \dots + v^{i_n})$}\\
&\mbox{$=\big(b^1_0(x)+\frac{1}{2}b^1_1(x)+\cdots+\frac{1}{n+1}b^1_n(x)\big)v^{i_0}+\cdots$}\\
&\mbox{$\;\;\;+\frac{1}{n+1}
 b^1_n(x) v^{i_n}$.}   
\end{split}    
\end{equation*}
As a matrix equation, $$(b^1_0(x),\dots, b^1_n(x))\, Q = (b_{i_0}(x),\dots ,b_{i_n}(x)),$$ where
$$
Q =
\begin{pmatrix}
 1  &  0 &  0 & \cdots  & 0 & 0 & 0   \\
 \frac{1}{2} & \frac{1}{2} &  0 & \cdots  & 0 & 0 & 0   \\
 \frac{1}{3}  & \frac{1}{3} & \frac{1}{3} & \cdots  & 0 & 0 & 0   \\
\vdots & \vdots   & \vdots & \vdots         &\vdots          & \vdots \\
 \frac{1}{n}  &  \frac{1}{n} & \frac{1}{n}  & \cdots  &  \frac{1}{n}             &  \frac{1}{n}        & 0  \\
 \frac{1}{n+1} & \frac{1}{n+1}  & \frac{1}{n+1}  & \cdots   & \frac{1}{n+1} &   \frac{1}{n+1} & \frac{1}{n+1}
\end{pmatrix}.$$
Therefore, $$(b^1_0(x),\dots, b^1_n(x))\, Q\cdot P= (b_{i_0}(x),\dots ,b_{i_n}(x))\,P$$ concluding the proof  since $Q\cdot P$ is the identity matrix.
\qed
\end{proof}

We end this subsection with the following result that is useful to easily detect the simplex  $\sigma^{i_0\cdots i_n}\in \Sd \sigma$
where the given point $x$ is located.

\begin{lemma}
Let $\sigma$ be the simplex defined in Lemma~\ref{lemma_tetra} and let $x$ be a point in $|\sigma|$. The point $x$ lies in $|\sigma^{i_0\cdots i_n}|$ if and only if the ordering $(i_0,\dots,i_n)$ makes that the barycentric coordinates $(b_{i_0}(x),\dots,b_{i_n}(x))$
are arranged in a non-increasing manner, that is,  $b_{i_0}(x)\geq \dots \geq b_{i_n}(x)$. 
\end{lemma}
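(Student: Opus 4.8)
The plan is to convert the geometric membership statement ``$x\in|\sigma^{i_0\cdots i_n}|$'' into a non-negativity condition on the barycentric coordinates of $x$ with respect to $\sigma^{i_0\cdots i_n}$, and then read that condition directly off the explicit matrix $P$ of Lemma~\ref{lem:P}.

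First I would record the elementary fact underlying everything: since $\sigma^{i_0\cdots i_n}$ is an $n$-simplex in $\R^n$, its vertices $v^{i_0},v^{i_0i_1},\dots,v^{i_0\cdots i_n}$ are affinely independent, so every $x\in\R^n$ has a \emph{unique} expression $x=\sum_{k=0}^n\beta_k\,v^{i_0\cdots i_k}$ with $\sum_k\beta_k=1$, and $x\in|\sigma^{i_0\cdots i_n}|$ if and only if $\beta_k\ge 0$ for all $k$. (The affine independence is itself a by-product of Lemma~\ref{lem:P}: the matrix $Q$ built in its proof is invertible.) By Lemma~\ref{lem:P} these coefficients are $(\beta_0,\dots,\beta_n)=(b_{i_0}(x),\dots,b_{i_n}(x))\,P$, so the membership question becomes: are all entries of $(b_{i_0}(x),\dots,b_{i_n}(x))\,P$ non-negative? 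One small point deserves a line of justification, namely that we may use this formula before knowing $x\in|\sigma^{i_0\cdots i_n}|$: the relation $(\beta)\,Q=(b_{i_0}(x),\dots,b_{i_n}(x))$ is linear and holds for \emph{any} point once $\beta$ is defined by inverting it, and since $P$ has all row sums equal to $1$ the resulting vector automatically sums to $1$; so one genuinely recovers $x$ as $\sum_k\beta_k\,v^{i_0\cdots i_k}$.

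Next I would carry out the one-line matrix multiplication. Writing $c_k=b_{i_k}(x)$ and noting that column $j$ of $P$ has only the two nonzero entries $P_{j,j}=j+1$ and $P_{j+1,j}=-(j+1)$ for $j<n$, while column $n$ has the single entry $P_{n,n}=n+1$, one obtains
\[
\beta_j=(j+1)\,(c_j-c_{j+1})\quad(0\le j\le n-1),\qquad \beta_n=(n+1)\,c_n.
\]
Because $j+1>0$ and $n+1>0$, non-negativity of all the $\beta_j$ is equivalent to $c_0\ge c_1\ge\dots\ge c_n$ together with $c_n\ge 0$. Finally I would observe that $c_n=b_{i_n}(x)\ge 0$ is automatic, since $x\in|\sigma|$ forces every barycentric coordinate $b_i(x)$ to be non-negative. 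Hence the membership condition collapses to $b_{i_0}(x)\ge b_{i_1}(x)\ge\dots\ge b_{i_n}(x)$, which is exactly the assertion.

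I do not expect a serious obstacle. The only things needing care are the index bookkeeping in the product $(c)\,P$, and the equivalence ``point in a geometric $n$-simplex $\iff$ all barycentric coordinates non-negative'', which relies on affine independence of the subdivided vertices — and that is already secured by the invertibility of $Q$ in Lemma~\ref{lem:P}. Note also that the proof uses nothing specific about the simplex of Lemma~\ref{lemma_tetra}, so the statement in fact holds for the barycentric subdivision of an arbitrary $n$-simplex in $\R^n$.
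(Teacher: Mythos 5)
Your proof is correct and follows essentially the same route as the paper: apply Lemma~\ref{lem:P} to get $\beta_j=(j+1)\bigl(b_{i_j}(x)-b_{i_{j+1}}(x)\bigr)$ for $j<n$ and $\beta_n=(n+1)\,b_{i_n}(x)$, then note that non-negativity of these coefficients is equivalent to the coordinates being non-increasing (the last one being automatically non-negative since $x\in|\sigma|$). Your extra remark justifying the use of the Lemma~\ref{lem:P} formula before knowing $x\in|\sigma^{i_0\cdots i_n}|$ is a worthwhile point of care that the paper's proof passes over silently, but it does not change the substance of the argument.
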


\begin{proof}
Recall that the point $x$ lies in
$|\sigma^{i_0\cdots i_n}|$ if its barycentric coordinates $(b^1_{0}(x),\dots,b^1_{n}(x))$ are nonnegative. Applying Lemma~\ref{lem:P}, we have $b^1_{n}(x)=(n+1)b_n(x)$, which is nonnegative since $x\in|\sigma|$. Furthermore,
$b^1_{j}(x)=(j+1)(b_{j}(x)-b_{j+1}(x))$ for any $j\in\{0,1,\dots,n-1\}$, which is nonnegative if and only if $b_{j}(x)\geq b_{j+1}(x)$, concluding the proof.
\qed
\end{proof}

\subsection{Training}\label{subsec:training}

This is the core of the SIMAP layer. In the first step, without considering any subdivision,  it is simply a perceptron with $n+1$ inputs and $k$ outputs if the given dataset is a point cloud in $\R^n$ labelled with $k$ classes. After successive subdivisions, this part of the SIMAP layer can be seen as the union of $m$  perceptrons with $n+1$ inputs and $k$ outputs since the idea of this part of the SIMAP layer is that a point is classified according to the maximal simplex of the subdivision on which it lies.
 
Let $\{v^0,\dots,v^n\}$ be the vertices of the $n$-simplex $\sigma$ defined in Lemma~\ref{lemma_tetra}.
Assume that our input dataset $D=(X,\lambda)$ is a finite set of points in $|\sigma|$,  together with a set of $k+1$ labels $\Lambda=\{0,\dots,k\}$ such that  each $x\in X$ 
 is labelled with a label $\lambda(x)$
 taken from $\Lambda$. 
A one-hot encoding representation is:
 $$L=  \big\{(0,\stackrel{j}\dots,0,1,0,\stackrel{k-j}{\dots},0):\;
 j\in \{0,\dots,k\}\big\}\,,
 $$
 where the one-hot vector $(0,\stackrel{j}\dots,0,1,0,\stackrel{k-j}{\dots},0)$ encodes the $j$-th label of $\Lambda$ for $j\in \{0,\dots,k\}$ and $\ell(x)=(\ell_0(x),\dots,\ell_k(x))$ encodes the label $\lambda(x)$ for any $x\in X $.
 
Observe that if we define a vertex map $\varphi^{\scriptscriptstyle (0)}:V\to L$ mapping each vertex $v\in V$ to a point in $\R^{k+1}$. 
Then, for any $x\in |\sigma|$ 
with barycentric coordinates $b(x)=(b_0(x),\dots,b_n(x))$, we have that $\varphi(x)=\sum_{i=1}^nb_i(x)\varphi^{(0)}(v^i)$ is a probability distribution satisfying  that the $i$-th coordinate of $\varphi(x)$ indicates the probability that $x$ belongs to the class $i\in \Lambda$.
 
In general, the values of  $\varphi^{\scriptscriptstyle (0)}(v^i)$ are unknown and the procedure explained can not be applied. 
Therefore, the idea for the 
next step of our algorithm is  to  define   a multiclass perceptron denoted by $\N$ with an input layer with $n+1$ neurons that computes the probability that  $x\in|\sigma|$ is labelled with a label of $\Lambda$
using the formula:
$$\N(x)=(\N_0(x),\dots, \N_k(x))=
\softmax\big(b(x)\cdot \Omega \big)
$$
where $\Omega\in \mathcal{M}_{(n+1)\times (k+1)}$
is a weight matrix
and $b(x)\in \R^{n+1}$ are the barycentric coordinates of $x$ with respect to $\sigma$.
The training procedure has the aim of learning the values of  $\varphi^{\scriptscriptstyle (0)}(v^i)$  that minimizes the error:
\[\mbox{$\Ll(x)=-\sum_{ h=1}^k\ell_h(x)\log(\N_h(x))$}\,,\]
 for $x\in D$.
As proven in \cite{paluzohidalgo2023explainability}, 
$$\mbox{$\frac{\partial\Ll(x)}{\partial p_j^{t}}=(\N_j(x)-\ell_j(x))b_{t}(x).$}$$
Then, using, for example, gradient descent, we have to update the entries $p_j^{t}$ of $\Omega$, as follows:
\[\mbox{$p_j^{t}:=
p_j^{t}-\eta(\N_j(x)-\ell_j(x))b_{t}(x)$.}\]
Observe that the $i$-th row of $\Omega$ can be thought of as the value of
$\varphi^{(0)}(v^i)$ for $i\in \{0,\dots ,n\}$.

Since the capacity of the perceptron $\N$ is limited, the idea is to train another perceptron $\N^1$ from it, emulating a barycentric subdivision.
This new perceptron $\N^1$ is initialized as
$$\N^1(x)=\softmax\big(
\xi_{\Sd \sigma}(x)
\cdot \Omega^1 \big)$$
where $\Omega^1$
is a $(2^{n+1}-1)\times(k+1)$ matrix whose rows are in one-to-one correspondence with the value of  
$\varphi$ applied to the vertices of $\Sd\sigma$ and
$
\xi_{\Sd \sigma}(x)
$ is computed as explained in subsection~\ref{subsec:subdivision}.
Specifically, by construction, we have 
$$\xi_{\Sd \sigma}(x)
\cdot \Omega^1  =
\xi_{\Sd \sigma}(x)
\cdot Q\cdot \Omega=b(x)\cdot\Omega\,.$$   So, initially, 
$\N^1(x)=\N(x)$ for any $x\in |\sigma|$. 
Then, the weights of $\Omega^1$ are updated using gradient descent as explained above.
The process can be iterated until a given error is reached. We will denote by $VC(\mathcal{N})$ to the VC dimension of the neural network $\mathcal{N}$. We have the following result.

\begin{theorem}
Let $\{v^0,\dots,v^n\}\subset\mathbb{R}^n$ be the vertices of the $n$-simplex $\sigma$ defined in Lemma~\ref{lemma_tetra}. Let $\mathcal{N}$ be the neural network
defined from $\sigma$ and $\mathcal{N}^k$ the neural network
defined from $\Sd^k \sigma$ with $k\ge 0$ as explained above.
Then,
 $$VC(\mathcal{N}^k)= ((n+1)!)^k\cdot (n+1)\,.$$
\end{theorem}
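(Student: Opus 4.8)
The plan is to bound $VC(\mathcal{N}^k)$ from both sides, using the description in Subsection~\ref{subsec:training} of $\mathcal{N}^k$ as a union of perceptrons, one attached to each maximal $n$-simplex of $\Sd^k\sigma$.

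I would first fix the combinatorics. By induction on $k$, the barycentric subdivision of an $n$-simplex has exactly $(n+1)!$ maximal $n$-simplices, indexed by the orderings $(i_0,\dots,i_n)$ of its $n+1$ vertices (as recalled in Subsection~\ref{subsec:subdivision}); distinct orderings yield distinct maximal simplices, with pairwise disjoint interiors, covering $|\sigma|$. Since $\Sd^k\sigma = \Sd(\Sd^{k-1}\sigma)$ subdivides each maximal simplex of $\Sd^{k-1}\sigma$ separately, $\Sd^k\sigma$ has exactly $r_k := ((n+1)!)^k$ maximal $n$-simplices $\tau_1,\dots,\tau_{r_k}$. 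By construction, when $x$ lies in the interior of $\tau_\ell$ the network outputs $\softmax\big(b^{(\ell)}(x)\cdot\Omega^{(\ell)}\big)$, where $b^{(\ell)}(x)\in\R^{n+1}$ are the barycentric coordinates of $x$ with respect to $\tau_\ell$ and $\Omega^{(\ell)}$ is the submatrix of $\Omega^k$ given by the rows indexed by the vertices of $\tau_\ell$. Since barycentric coordinates with respect to $\tau_\ell$ parametrize $|\tau_\ell|$ affinely, on $\tau_\ell$ the model realizes exactly the affine classifiers of $|\tau_\ell|\subset\R^n$, whose VC dimension is $n+1$ by the classical result quoted in Section~\ref{sec:background}.

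For the lower bound I would exhibit a shattered set of size $r_k(n+1)$. Inside the interior of each $\tau_\ell$ choose $n+1$ affinely independent points $x^{\ell,0},\dots,x^{\ell,n}$, and let $X^\star$ be the union of these $r_k(n+1)$ points. Given any dichotomy $c$ of $X^\star$, for each $\ell$ the $(n+1)\times(n+1)$ matrix with rows $b^{(\ell)}(x^{\ell,j})$ is invertible --- the $x^{\ell,j}$ are affinely independent and their barycentric coordinates sum to $1$, hence are linearly independent --- so one can solve for $\Omega^{(\ell)}$ realizing the restriction of $c$ to $\{x^{\ell,j}\}_{j}$. Because the relevant points of distinct maximal simplices lie in disjoint interiors, these choices are made independently and together realize $c$; hence $VC(\mathcal{N}^k)\ge r_k(n+1)$. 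For the upper bound, take any $Y$ with $|Y| > r_k(n+1)$; assigning each $y\in Y$ to a maximal simplex whose realization contains it, the pigeonhole principle gives some $\tau_\ell$ receiving at least $n+2$ points of $Y$. As the perceptron on $\tau_\ell$ cannot shatter $n+2$ points of $|\tau_\ell|$, some dichotomy of those points is unrealizable there, and extending it arbitrarily to the rest of $Y$ yields a dichotomy of $Y$ that $\mathcal{N}^k$ cannot realize; hence $VC(\mathcal{N}^k)\le r_k(n+1)$. Combining the two bounds gives the claimed equality.

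The step I expect to be the main obstacle is the simultaneous realizability in the lower bound: one has to argue that fixing the perceptron on one maximal simplex does not over-constrain the others. This is precisely where the proof should make explicit the interpretation of $\mathcal{N}^k$ as a union of independent perceptrons over the maximal simplices of $\Sd^k\sigma$ (Subsection~\ref{subsec:training}), rather than as a single globally parametrized simplicial map, since in the latter reading the shared vertex weights would couple neighbouring simplices. A minor additional point is the consistent assignment of boundary points to maximal simplices in the upper bound, which is harmless once the shattered configurations are taken in the open interiors.
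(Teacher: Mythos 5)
Your overall strategy --- count the maximal simplices of $\Sd^k\sigma$, attach to each a perceptron of VC dimension $n+1$, and combine a per-simplex shattering construction with a pigeonhole upper bound --- is exactly the strategy of the paper, whose proof simply asserts that $\N^k$ ``acts as $((n+1)!)^k$ independent perceptrons'' and multiplies. Your count of maximal simplices, your per-simplex analysis via invertibility of the barycentric-coordinate matrix, and your upper-bound argument are all sound.

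The genuine gap is the step you yourself flag as ``the main obstacle'', and it cannot be repaired by declaring that $\N^k$ should be read as a union of independent perceptrons rather than as a single globally parametrized map. By the definition in Subsection~\ref{subsec:training}, $\N^k(x)=\softmax(\xi_{\Sd^k\sigma}(x)\cdot\Omega^k)$ with one row of $\Omega^k$ per \emph{vertex} of $\Sd^k\sigma$, and every vertex of $\Sd^k\sigma$ belongs to several maximal simplices (the barycenter of $\sigma$ belongs to all of them). Hence the submatrices $\Omega^{(\ell)}$ share rows, the $r_k$ linear systems you propose to solve are coupled, and the choices cannot be made independently. A dimension count shows the obstruction is not merely technical: for $k=1$ the classifier is a fixed feature map $\xi_{\Sd\sigma}:|\sigma|\to\R^{2^{n+1}-1}$ followed by a linear map, so no set of more than $2^{n+1}-1$ points can be shattered, whereas the claimed value is $(n+1)!\,(n+1)$, which is already larger for $n=1$ ($4>3$) and $n=2$ ($18>7$). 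So with the architecture as actually defined the lower bound --- and hence the stated equality --- does not follow; it would hold only for a modified architecture in which each maximal simplex carries its own private copy of the vertex weights. In short, you have faithfully reconstructed the paper's intended argument and correctly located its weak point, but the proof as written does not close that gap, and neither does the paper's.
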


\begin{proof}
For $k=0$, 
the neural network $\N$ defined from $\sigma$
is a perceptron with $n+1$ entries (the $n+1$ barycentric coordinates with respect to $\sigma$) and no bias  whose VC-dimension is $n+1$.
For $k>0$ 
and  $x\in |\Sd^k\sigma|$, there exists a maximal simplex $\mu$ in $\Sd^k\sigma$ such that  $x\in |\mu|$. Then, $\xi_{\Sd^k\sigma}(x)$ 
has at most
$n+1$ non-null coordinates. Since $\Sd^k\sigma$ has $((n+1)!)^k$ maximal simplices, the neural network $\N^k$ defined from $\Sd^k\sigma$
acts as $((n+1)!)^k$ independent perceptions. Then, the VC-dimension of $\N^k$ is
$((n+1)!)^k \cdot (n+1)$.
\qed  
\end{proof}

Let us remark that, by definition, there exists no simplicial map that 
maps a simplex to another simplex of higher dimension.
\begin{lemma}
Let $D=(X,\lambda)$ be a dataset with $X\subset\R^n$. Let $\sigma$ be
the $n$-simplex 
defined in Lemma~\ref{lemma_tetra}. Let $\mathcal{N}^0
$ be the neural network
defined from $\sigma$ and $\mathcal{N}^k$ the neural network
defined from $\Sd^k \sigma$ with $k>
0$ as explained above. 
Fixing $k\geq 0$, let $\mu\in \Sd^k \sigma$ 
such that $Y=\{x^0,\dots,x^{n+1}\}\subseteq X$ is a set of
points in $|\mu|$ satisfying that
no two 
points of $Y$ have
the same label. Then, $\mathcal{N}^k$ cannot correctly classify $X$.
\end{lemma}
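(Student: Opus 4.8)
The plan is to argue by contradiction, exploiting that on a single maximal simplex of $\Sd^k\sigma$ the classifier $\mathcal{N}^k$ can output only a bounded number of labels, together with the remark preceding the statement that a simplicial map never increases dimension.

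First I would fix the maximal simplex $\mu\in\Sd^k\sigma$ with $Y\subseteq|\mu|$ and write $\mu=\{w^0,\dots,w^n\}$, an $n$-simplex. For every $x\in|\mu|$ the vector $\xi_{\Sd^k\sigma}(x)$ has nonzero entries only in the $n+1$ coordinates indexed by $w^0,\dots,w^n$, and there it equals the barycentric coordinates $b^\mu(x)=(b^\mu_0(x),\dots,b^\mu_n(x))$ of $x$ with respect to $\mu$. Under the simplicial-map reading of $\mathcal{N}^k$ from Section~\ref{subsec:training}, in which $\mathcal{N}^k$ realises the simplicial map $\varphi$ induced by a vertex map $\varphi^{(0)}\colon V\to L$ valued in the one-hot vectors of $L$, with the monotone $\softmax$ applied on top so that $\arg\max_j\mathcal{N}^k(x)_j=\arg\max_j\varphi(x)_j$, the label assigned by $\mathcal{N}^k$ to $x$ is $\arg\max_j\varphi(x)_j$ with $\varphi(x)=\sum_{i=0}^{n}b^\mu_i(x)\,\varphi^{(0)}(w^i)$.

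Now comes the key step. By the remark, the $n$-simplex $\mu$ is sent by $\varphi$ to a simplex of $L$ of dimension at most $n$; concretely, the set $\{\varphi^{(0)}(w^0),\dots,\varphi^{(0)}(w^n)\}$, after deleting repetitions, is a simplex of $L$ with at most $n+1$ vertices. Let $J\subseteq\Lambda$ be the set of classes whose one-hot vectors are those vertices, so $|J|\le n+1$. Then for every $x\in|\mu|$ the vector $\varphi(x)$ is a convex combination of one-hot vectors indexed by $J$, hence a probability vector with $\varphi(x)_j=0$ for all $j\notin J$, so $\arg\max_j\varphi(x)_j\in J$; thus $\mathcal{N}^k$ assigns every point of $|\mu|$ a label in $J$. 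Since the $n+2$ points of $Y\subseteq|\mu|$ have pairwise distinct labels, $\{\lambda(x^0),\dots,\lambda(x^{n+1})\}$ is a set of $n+2$ distinct elements of $\Lambda$; if $\mathcal{N}^k$ classified all of $X$ correctly it would in particular map each $x^t$ to $\lambda(x^t)$, whence $\{\lambda(x^0),\dots,\lambda(x^{n+1})\}\subseteq J$ and $n+2\le|J|\le n+1$, a contradiction. Therefore some point of $Y$, and hence of $X$, is misclassified.

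The step I expect to be the main obstacle is the one just described: one must be sure that the classification performed by $\mathcal{N}^k$ on $|\mu|$ really is the $\arg\max$ of a simplicial-map output taking values in $L$, so that the ``no dimension increase'' remark applies — with an unconstrained weight matrix, a $\softmax$ head fed only the $n+1$ barycentric coordinates of $\mu$ could, on its own, separate more than $n+1$ classes inside that simplex. Pinning down this interpretation of $\mathcal{N}^k$, and fixing a tie-breaking convention in the $\arg\max$ so that a class with $\varphi(x)_j=0$ is never selected, is exactly what makes the counting argument legitimate; the rest is routine bookkeeping with barycentric coordinates as in Section~\ref{subsec:subdivision}.
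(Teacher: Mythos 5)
Your proof is correct and follows essentially the same route as the paper's: restrict $\mathcal{N}^k$ to $|\mu|$, observe that under the simplicial-map reading the output is supported on the at most $n+1$ class-coordinates hit by $\varphi^{(0)}$ on the vertices of $\mu$, and derive a contradiction with the $n+2$ distinct labels of $Y$. You are also right to flag that the counting argument only becomes legitimate once $\Omega^k$ is read as a vertex map into the one-hot set $L$ (an unconstrained softmax head on $n+1$ barycentric inputs can separate more than $n+1$ classes); the paper's own proof glosses over exactly this point, which your version makes explicit.
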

\begin{proof}
The dimension of $\mu$ is $n+1$. 
Then, for all $x\in Y$, 
$\xi_{\Sd^k \sigma}(x)$ is possibly not null for fixed $n+1$ coordinates. Therefore, the
 output of $\mathcal{N}^k(x)=\softmax (\xi_{\Sd^k \sigma}(x)\cdot \Omega^k)$
is again possibly not null for fixed $n+1$ coordinates, so it can  cannot correctly classify $Y$ (nor $X$) since  the number of classes for $Y$ (and then, $X$) is at least $n+2$.  
\qed
\end{proof}

Summing up, given a dataset $D=(X,\lambda)$ with $X\subseteq \R^n$ and $k+1$ classes, the SIMAP layer $\N^k$ is a neural network with no hidden layer and 
acting as $\big((n+1)!\big)^k$ independent perceptrons with at most $n+1$ activated neurons in the input layer and $k+1$ neurons in the output layer. A pseudocode for SIMAP layers can be found in Algorithm~\ref{algo:SIMAP}.

\begin{figure*}[ht]
    \centering
    \includegraphics[width=0.8\textwidth]{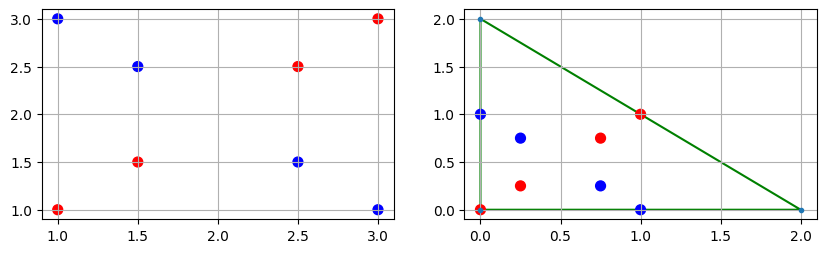}
    \caption{Dataset of Example~\ref{example:XOR}. On the left, the input data for the binary classification is shown. On the right, its translation into the simplex $\sigma$ of Lemma~\ref{lemma_tetra}.}
    \label{fig:example3}
\end{figure*}

\begin{algorithm}
\caption{Pseudocode to compute  SIMAP layers }
\label{algo:SIMAP}
 \begin{algorithmic}
 
\STATE  {\bf Input:} 
A dataset $D=(X,\lambda)$ 
with a set of labels $\Lambda$. \STATE \hspace{1cm} An integer $\ell$ (number of subdivisions).
\STATE   {\bf Output:} A SIMAP layer that generalizes $D$.
\STATE
 \IF{$X$ does not lie in the hypercube $\Hh$}
\STATE {\bf Transform} the points of $X$ so that they lie in $\Hh$
(see Subsection~\ref{subsec:poly})
\ENDIF
\STATE $k=0$
\STATE {\bf Compute} the barycentric coordinates $b(x)$ of the all points $x\in X$ (see Subsection~\ref{subsec:poly}).
\STATE {\bf Train} the perceptron 
$\N(\;)=\softmax(b(\;)\cdot \Omega)$ (as explained in Subsection~\ref{subsec:training}).
\WHILE{$k\leq \ell$}
\STATE  Compute the barycentric coordinates $\xi(x)$ with respect to $\Sd^{k}\sigma$ of the all points $x\in X$ (as explained in Subsection~\ref{subsec:subdivision}). 
\STATE Train the perceptron 
$\N^{k}(\;)=\softmax(\xi(\;)\cdot \Omega)$ (as explained in Subsection~\ref{subsec:training}).
\ENDWHILE
 \end{algorithmic}
\end{algorithm}

Below, we provide a step-by-step example following Algorithm~\ref{algo:SIMAP}
showing how to compute a SIMAP layer for a given dataset $D$.

\begin{figure}[ht]
    \centering
    \includegraphics[width=0.4\textwidth]{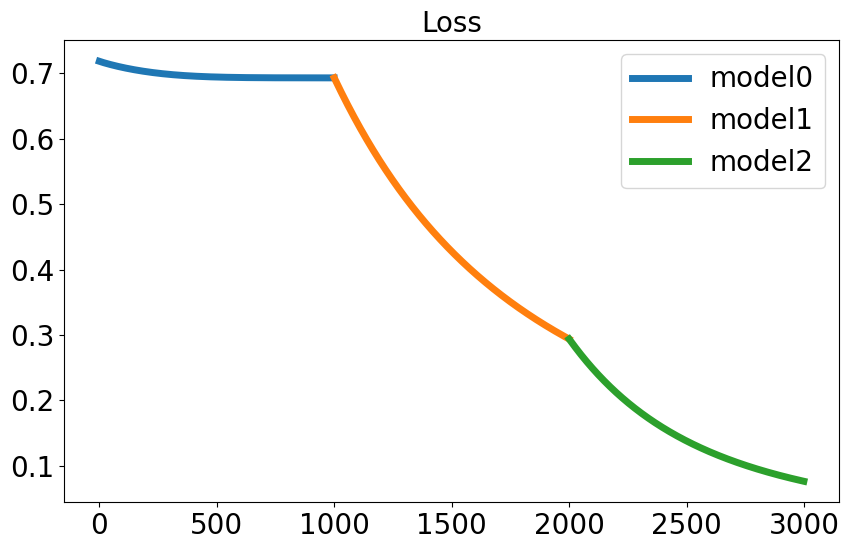}
\includegraphics[width=0.4\textwidth]{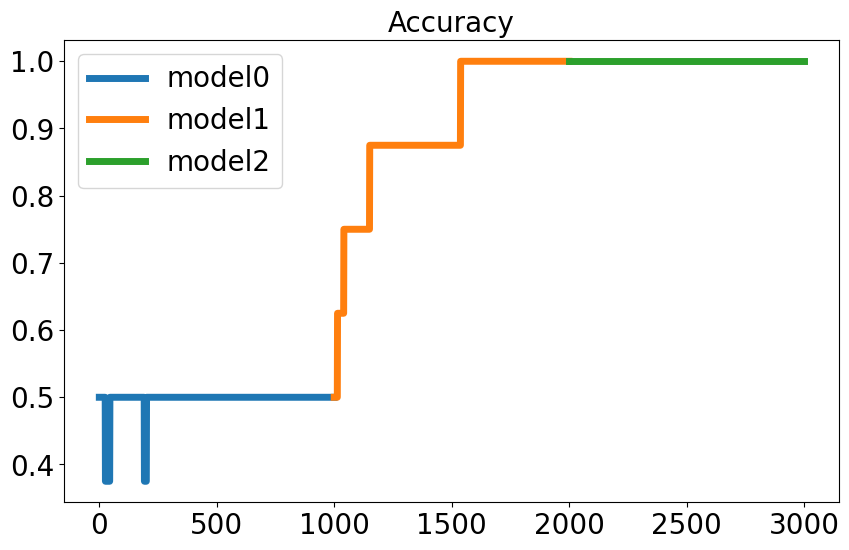}
    \caption{Training curves of Example~\ref{example:XOR}. On the top: the loss function is shown. On the bottom: the accuracy values for the different epochs are shown.}
    \label{fig:example_loss_acc}
\end{figure}

\begin{figure}[ht]
    \centering
    \includegraphics[width=0.4\textwidth]{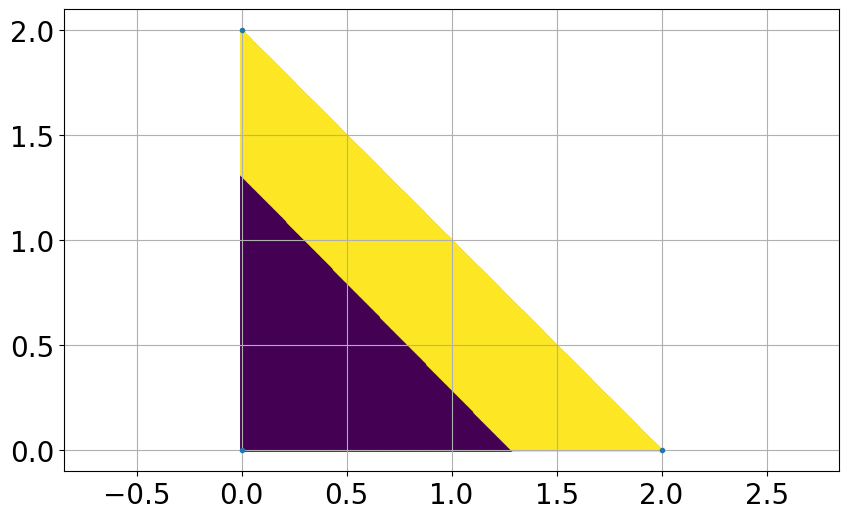}
\includegraphics[width=0.4\textwidth]{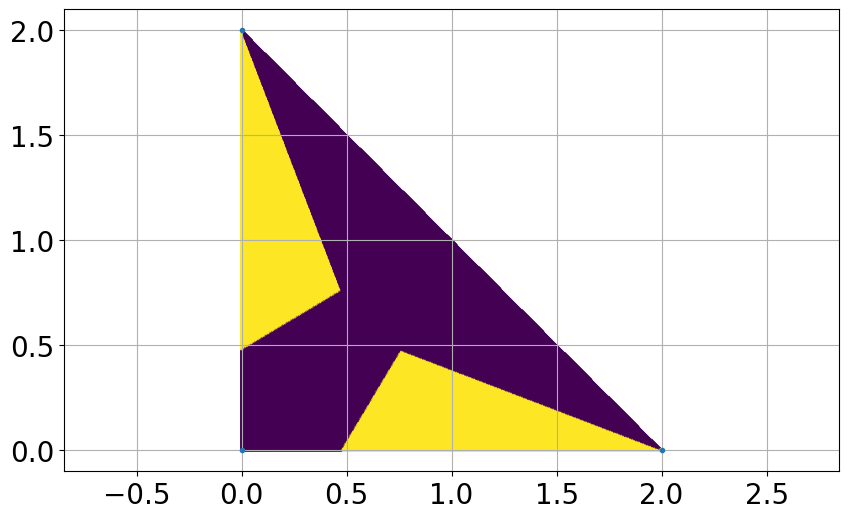}
\includegraphics[width=0.4\textwidth]{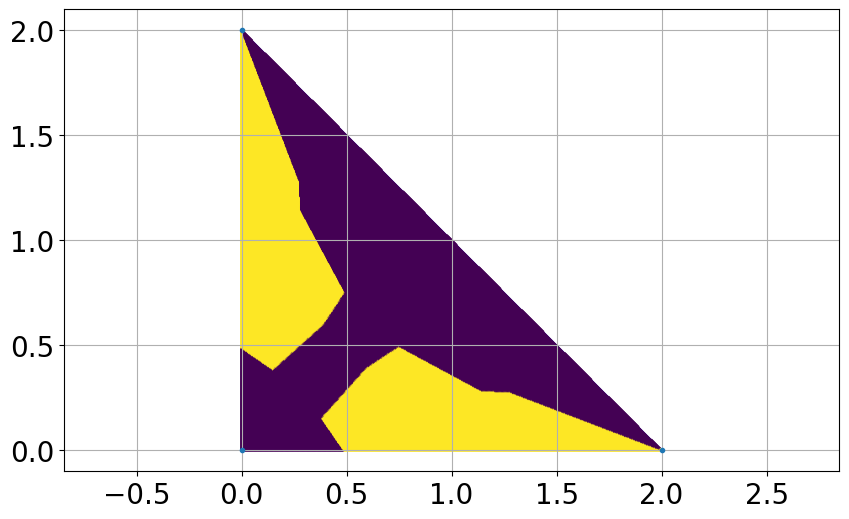}
    \caption{From top to bottom, the decision boundaries for Example~\ref{example:XOR} for the different models with respect to $\sigma$, $\Sd \sigma$, and $\Sd^2 \sigma$, respectively.}
    \label{fig:decision_boundary}
\end{figure}

\begin{example}\label{example:XOR}
Let us consider a dataset 
$D=(X,\lambda)$, with $X\subset\R^2$,
inspired by
the classic XOR-dataset (see Figure~\ref{fig:example3}). Specifically, 
$X
=\big\{v^0=(1,1), v^1=\left(\frac{3}{2},\frac{3}{2}\right),v^2=\left(\frac{5}{2},\frac{5}{2}\right), v^3= (3,3), v^4=(1,3),v^5=\left(\frac{3}{2}, \frac{5}{2}\right),v^6=\left(\frac{5}{2}, \frac{3}{2}\right), v^7=(3,1) \big\}$. The point cloud $X$
is labelled by $\lambda(v^i)=0$ if $i<4$ and $\lambda(v^i)=1$ in any other case. The steps to calculate a SIMAP layer that classifies $X$
are the following:
\begin{enumerate}
    \item Dataset preparation: the points of $X$ are centered inside the simplex $\sigma$ of Lemma~\ref{lemma_tetra} by subtracting the mean and then rescaling the coordinates of the points between $0$ and $1$ (between $0$ and half of the dimension for the general case).
    Hence, we obtain the following matrix whose rows are the coordinates of the points of the transformed
    point cloud $X$:
    $$ V=\begin{pmatrix}
0 & 0 \\
0.25 & 0.25 \\
0.75 & 0.75 \\
1 & 1 \\
0 & 1 \\
0.25 & 0.75 \\
0.75 & 0.25 \\
1 & 0 
\end{pmatrix}.  $$
\item Barycentric coordinates computation: we compute the barycentric coordinates of each point of $X$ with respect to the simplex $\sigma$.
To do so, we have to multiply the matrix $V$ by the matrix $M$. obtained using the formula from Lemma~\ref{lemma:barycentric_comp}:
\[\begin{pmatrix}
1& 0 & 0 \\
1& 0.25 & 0.25 \\
1& 0.75 & 0.75 \\
1& 1 & 1 \\
1& 0 & 1 \\
1& 0.25 & 0.75 \\
1& 0.75 & 0.25 \\
1& 1 & 0 
\end{pmatrix}\cdot  \begin{pmatrix}
1 & 0 & 0 \\
-\frac{1}{2} & \frac{1}{2} & 0 \\
-\frac{1}{2} & 0 & \frac{1}{2} 
\end{pmatrix}.
\]
\item SIMAP layer training: A first SIMAP layer can be trained using as input the matrix obtained in the previous step. After training, we obtained the following weight
matrix: 
$$
W_0=\begin{pmatrix}
0.46 & 0.48 \\
-0.15 & -0.34 \\
0.79 & 0.93 
\end{pmatrix}$$
\item Transfer learning:  to increase the capacity of the SIMAP layer, we  apply barycentric subdivisions and inherit the previous weight matrix.
Let us denote by $B_0$ the 
$25\times 7$ matrix whose entries are the barycentric coordinates of the vertices of $\Sd \sigma$ with respect to $\sigma$ in the correct row ordering, i.e. $B_0=\xi_{\Sd \sigma}(x)
\cdot \Omega^1$. Then, the init weight matrix 
$W_1$ of the new SIMAP layer is the
$25\times 2$ matrix:
$$W_1=B_0\cdot W_0\,.$$
\end{enumerate}

The steps 3 and 4 can be iterated as many times as needed.
In our case, we applied the methodology for two barycentric subdivisions and obtained the curves for the training accuracy and loss function shown in Figure~\ref{fig:example_loss_acc}.

In figure~\ref{fig:decision_boundary} we can see the decision boundaries for the different models with respect to $\sigma$, $\Sd \sigma$, and $\Sd^2 \sigma$. As expected, the complexity of the decision boundary increases when the barycentric subdivision process is
iterated. 
\end{example}

\section{Experiments}\label{sec:exper}
This section presents various experiments. In the first set of experiments, synthetic datasets were
used.
In the second set of experiments, the proposed methodology is applied as a final layer of a convolutional neural network to classify the MNIST dataset. In all experiments, we used the Adam \cite{DBLP:journals/corr/KingmaB14} training algorithm. 

\subsection{Experiments with synthetic datasets}\label{sec:synthetic_exp}

In this set of experiments, we used synthetic datasets composed of 500 points for binary classification from \cite{Guyon2003DesignOE} \texttt{scikit-learn} implementation 
for different numbers of features (2,3,4 and 5). The datasets were 
split into training and test sets with a proportion of $80\%$ and $20\%$, respectively. The steps followed are the same as in Example~\ref{example:XOR}. The SIMAP layers were trained for $1000$ epochs and barycentric subdivisions were applied two times. The results are shown in Table~\ref{table:results_synthetic} where the values are the mean of $5$ repetitions. As we can see from the results, when applying barycentric subdivisions, the SIMAP layer quickly reaches overfitting, being, in general, enough to apply just one barycentric subdivision for this dataset. 

\begin{table}[ht]
\caption{Loss and accuracy values on training and test set for the experiment in Section~\ref{sec:synthetic_exp}. }
\centering
\begin{tabular}{|c|c|cc|cc|}
\hline
\multirow{2}{*}{n} & \multirow{2}{*}{no. subdivisions} & \multicolumn{2}{c|}{Loss}         & \multicolumn{2}{c|}{Accuracy}     \\ \cline{3-6} 
                   &                                   & \multicolumn{1}{c|}{Train} & Test & \multicolumn{1}{c|}{Train} & Test \\ \hline
\multirow{3}{*}{2} & 0                                 & \multicolumn{1}{c|}{0.32}  & 0.33 & \multicolumn{1}{c|}{0.9}   & 0.87 \\ \cline{2-6} 
                   & 1                                 & \multicolumn{1}{c|}{0.26}  & 0.27 & \multicolumn{1}{c|}{0.91}  & 0.91 \\ \cline{2-6} 
                   & 2                                 & \multicolumn{1}{c|}{0.22}  & 0.26 & \multicolumn{1}{c|}{0.91}  & 0.89 \\ \hline
\multirow{3}{*}{3} & 0                                 & \multicolumn{1}{c|}{0.44}  & 0.4  & \multicolumn{1}{c|}{0.82}  & 0.84 \\ \cline{2-6} 
                   & 1                                 & \multicolumn{1}{c|}{0.39}  & 0.37 & \multicolumn{1}{c|}{0.83}  & 0.81 \\ \cline{2-6} 
                   & 2                                 & \multicolumn{1}{c|}{0.26}  & 0.48 & \multicolumn{1}{c|}{0.87}  & 0.83 \\ \hline
\multirow{3}{*}{4} & 0                                 & \multicolumn{1}{c|}{0.38}  & 0.36 & \multicolumn{1}{c|}{0.87}  & 0.87 \\ \cline{2-6} 
                   & 1                                 & \multicolumn{1}{c|}{0.29}  & 0.29 & \multicolumn{1}{c|}{0.88}  & 0.85 \\ \cline{2-6} 
                   & 2                                 & \multicolumn{1}{c|}{0.14}  & 1.04 & \multicolumn{1}{c|}{0.95}  & 0.83 \\ \hline
\multirow{3}{*}{5} & 0                                 & \multicolumn{1}{c|}{0.34}  & 0.31 & \multicolumn{1}{c|}{0.87}  & 0.87 \\ \cline{2-6} 
                   & 1                                 & \multicolumn{1}{c|}{0.24}  & 0.29 & \multicolumn{1}{c|}{0.92}  & 0.87 \\ \cline{2-6} 
                   & 2                                 & \multicolumn{1}{c|}{0.03}  & 0.99 & \multicolumn{1}{c|}{0.92}  & 0.76 \\ \hline
\end{tabular}
\label{table:results_synthetic}
\end{table}

\subsection{Convolutional network in conjunction with a SIMAP layer}

In this experiment, we  show
the case where a convolutional layer is combined with a SIMAP layer. 
We used the MNIST dataset.
\begin{enumerate}
    \item Dataset: The MNIST 
    dataset is composed of $60000$ training grayscale images and $10000$ test images of size $28\times 28$.
    \item Convolutional network training: A convolutional neural network is trained for $10$ epochs using the training set. 
    The architecture used is
       the following:
    \begin{align*}
&\text{Conv2D} & (26, 26, 28)  \\
&\text{MaxPooling2D} & (13, 13, 28)  \\
&\text{Conv2D} & (1, 11, 64)  \\
&\text{MaxPooling2D} & (5, 5, 64) \\
&\text{Conv2D} & (1, 1, 4)  \\
&\text{Flatten} & (4)  \\
&\text{Dense} & (64)  \\
&\text{Dense} & (10) \\
\end{align*}
This architecture reached an accuracy of $0.98$ and a loss value of $0.057$ on the test set.
\item Dataset preparation: The output of the {\it flatten} layer is used as a four-dimensional point cloud. Next, it is scaled and translated within the simplex $\sigma$ defined 
in Lemma~\ref{lemma_tetra}.
We denote by $X$ the transformed point cloud.
\item Computation of the barycentric coordinates: The barycentric coordinates of the points of $X$ with respect to the simplex $\sigma$, as well as their barycentric coordinates with respect to the first and second barycentric subdivisions ($\Sd \sigma$, and $\Sd^2 \sigma$), are calculated following the methodology presented in Lemma~\ref{lemma:barycentric_comp}.
\item SIMAP layer training: Sequentially, three SIMAP layers were trained, with weights transferred from \ro{a}
SIMAP layer to the next as explained in Section~\ref{subsec:training}. The accuracy and loss values reached in the test set are shown in Table~\ref{tab:model_performance}.
\end{enumerate}

\begin{table}[ht]
\centering
\caption{Loss and accuracy values of the SIMAP layer with the different 
barycentric subdivision iterations and the Convolutional Neural Network without the SIMAP layer.}
\begin{tabular}{|c|c|c|c|}
\hline
                              & Number of subdivisions                          & Loss & Accuracy \\ \cline{2-4} 
                              & 0                                               & 0.39 & 0.97     \\ \cline{2-4} 
                              & 1                                               & 0.06 & 0.98     \\ \cline{2-4} 
\multirow{-4}{*}{SIMAP layer} & 2                                               & 0.05 & 0.98     \\ \hline
CNN                           & \cellcolor[HTML]{C0C0C0}{\color[HTML]{9B9B9B} } & 0.06 & 0.98     \\ \hline
\end{tabular}
\label{tab:model_performance}
\end{table}

Based on the results obtained, we can conclude that achieving good performance does not require many barycentric subdivisions. Furthermore, the performance of convolutional neural networks is not adversely affected by the application of the SIMAP
layer.

\section{Conclusions and future work}
Blind optimization methods have shown to be an efficient method for finding an accurate set of weights to solve real-world problems with neural networks. This ability to solve problems anyhow has been the main target during the first years of the development of Deep Learning. Nevertheless, the wide use of Artificial Intelligence (AI) methods in many domestic areas is turning the perception of the social use of AI. The development of AI methods that solve problems anyhow is no longer socially accepted. In this way, in the next years, the interpretability of the model, i.e., the ability to interpret the decision-making protocols in AI in a human-readable way, will be an inexcusable requirement in the development of AI. This paper is a contribution to this research line, and, to the best of our knowledge, no other layer based on simplicial map methods for improving the interpretability of neural networks has been presented to establish a comparison.

Specifically, the main contributions of this work are the SIMAP layer definition and
an efficient matrix-based algorithm for their implementation. The benefits of this new layer are their interpretability
and their possibility to be applied with deep learning models such as convolutional neural networks. In the future, we want to exploit their explainability abilities, as well as develop specific barycentric subdivisions to reduce the number of needed simplices and, therefore, to reduce the complexity.

\section*{Code availability}

The code for the experiments is available in \url{https://github.com/Cimagroup/SIMAP-layer}.

\section*{Acknowledgments}

The work was supported in part by the European Union HORIZON-CL4-2021-HUMAN-01-01 under grant agreement 101070028 (REXASI-PRO) and by  
TED2021-129438B-I00 / AEI/10.13039/501100011033 / Unión Europea NextGenerationEU/PRTR.


 
\bibliography{biblio}
\bibliographystyle{IEEEtran}
%


\newpage

 




\vfill

\end{document}